\documentclass{article}
\usepackage{makecell}
\usepackage{microtype}
\usepackage{graphicx}
\usepackage{subcaption}
\usepackage{booktabs} 
\usepackage{multirow}
\usepackage{wrapfig}
\usepackage{subcaption}
\usepackage[breakable]{tcolorbox}
\usepackage{fontawesome5}
\usepackage{array}
\usepackage{xcolor}
\usepackage{colortbl}

\usepackage{xurl}
\usepackage{hyperref}
\usepackage[dvipsnames,table]{xcolor}

\usepackage[preprint]{icml2026}

\usepackage{amsmath}
\usepackage{amssymb}
\usepackage{mathtools}
\usepackage{amsthm}
\usepackage{enumitem}

\definecolor{myblue}{RGB}{216,236,255}
\newcommand{\down}[1]{\textcolor{Maroon}{\small\ $\downarrow$ {#1}}}
\newcommand{\uprise}[1]{\textcolor{OliveGreen}{\small\ $\uparrow$ {#1}}}
\newcommand{\basex}[1]{\textcolor{gray!50}{\small\ $\uparrow$ {#1}}}
\usepackage[capitalize,noabbrev]{cleveref}

\theoremstyle{plain}
\newtheorem{theorem}{Theorem}[section]
\newtheorem{proposition}[theorem]{Proposition}

\theoremstyle{definition}

\theoremstyle{remark}
\newtheorem{remark}[theorem]{Remark}

\usepackage[textsize=tiny]{todonotes}

\icmltitlerunning{Experience is the Best Teacher: Motivating Effective Exploration in Reinforcement Learning for LLMs}

\begin{document}

\twocolumn[
  \icmltitle{Experience is the Best Teacher: \\Motivating Effective Exploration in Reinforcement Learning for LLMs}



  \icmlsetsymbol{equal}{\faEnvelope[regular]}

  \begin{icmlauthorlist}
    \icmlauthor{Wenjian Zhang}{dlut}
    \icmlauthor{Kongcheng Zhang}{zju}
    \icmlauthor{Jiaxin Qi}{cnic}
    \icmlauthor{Baisheng Lai}{cnic,equal}
    \icmlauthor{Jianqiang Huang}{cnic}

  \end{icmlauthorlist}

  \icmlaffiliation{dlut}{Dalian University of Technology}
  \icmlaffiliation{zju}{Zhejiang University}
  \icmlaffiliation{cnic}{Chinese Academy of Sciences}

  \centering
  \textsuperscript{1}Dalian University of Technology,
  \textsuperscript{2}Zhejiang University,
  \textsuperscript{3}Chinese Academy of Sciences

  \centering
  \texttt{zhangwenj@mail.dlut.edu.cn},
  \texttt{zhangkc@zju.edu.cn}
  \texttt{bslai@cnic.cn}
  
  \icmlcorrespondingauthor{Baisheng Lai}{bslai@cnic.cn}

  \icmlkeywords{Machine Learning, ICML}

  \vskip 0.3in
]

\renewcommand{\thefootnote}{} 
\footnotetext{\faEnvelope[regular] Corresponding author}
\renewcommand{\thefootnote}{} 



\printAffiliationsAndNotice{}  

\begin{abstract}
Reinforcement Learning~(RL) with rubric-based rewards has recently shown remarkable progress in enhancing general reasoning capabilities of Large Language Models~(LLMs), yet still suffers from ineffective exploration confined to current policy distribution. In fact, RL optimization can be viewed as steering the policy toward an ideal distribution that maximizes the rewards, while effective exploration should align efforts with desired target. Leveraging this insight, we propose HeRL, a \textit{\textbf{\underline{H}}indsight \textbf{\underline{e}}xperience guided \textbf{\underline{R}}einforcement \textbf{\underline{L}}earning} framework to bootstrap effective exploration by explicitly \textit{telling LLMs the desired behaviors} specified in rewards. Concretely, HeRL treats failed trajectories along with their unmet rubrics as hindsight experience, which serves as in-context guidance for the policy to explore desired responses beyond its current distribution. Additionally, we introduce a bonus reward to incentivize responses with greater potential for improvement under such guidance. HeRL facilitates effective learning from desired high-quality samples without repeated trial-and-error from scratch, yielding a more accurate estimation of the expected gradient theoretically. Extensive experiments across various benchmarks demonstrate that HeRL achieves superior performance gains over baselines, and can further benefit from experience guided self-improvement at test time. Our code is available at \url{https://github.com/sikelifei/HeRL}.

\end{abstract}

\section{Introduction}
\begin{quote}
\emph{``All of what we mean by goals and purposes can be well thought of as maximization of the expected value of the cumulative sum of a received scalar signal (reward).''}
\hfill --- Richard Sutton
\end{quote}

Large Language Models (LLMs) have demonstrated impressive potential over a wide range of complex reasoning tasks, including mathematical analysis~\citep{hendrycks2021measuring, he2024olympiadbench, zhang2025reasoning}, code generation~\citep{chen2021evaluating, wei2025swe}, and robotic control~\citep{krause2023palm, huang2025graphcot}. One of the pivotal techniques driving these advancements is Reinforcement Learning with Verifiable Rewards (RLVR), where models optimize their reasoning through rule-based correctness verification~\citep{guo2025deepseek, openaio3, zeng2025simplerl}. While effective in tasks with clear verifiable outcomes, the RLVR paradigm remains challenging in open-ended scenarios like healthcare~\citep{qiu2024llm, li2024agent, arora2025healthbench} and instruction following~\citep{zhou2023instruction, he2025advancedif, wen2024benchmarking}, as straightforward ground-truth labels are often unavailable.

\begin{figure*}[!t]
    \centering
    \begin{minipage}{0.32\textwidth}
        \centering
        \includegraphics[width=\linewidth]{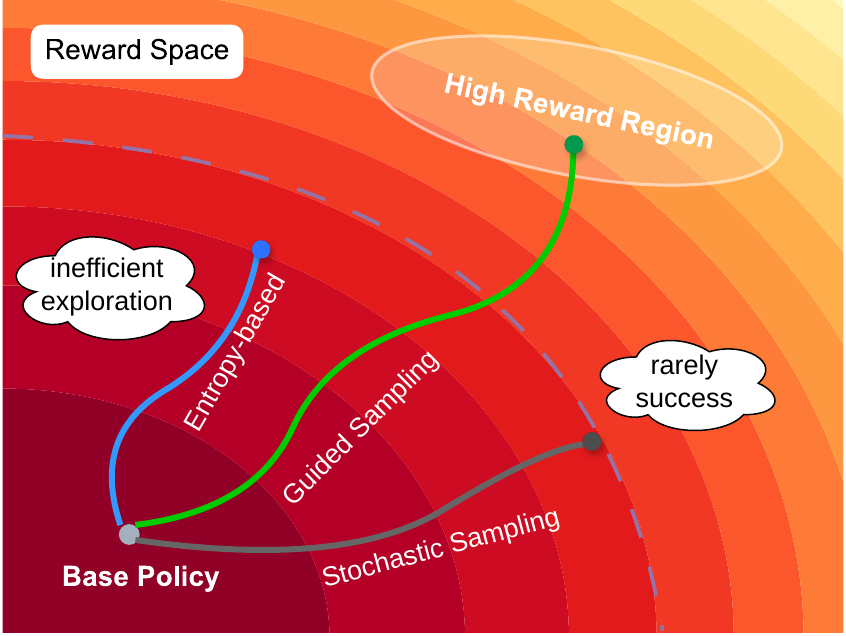}
    \end{minipage}
    \hspace{-0.01\textwidth}
    \begin{minipage}{0.66\textwidth}
        \centering
        \includegraphics[width=\linewidth]{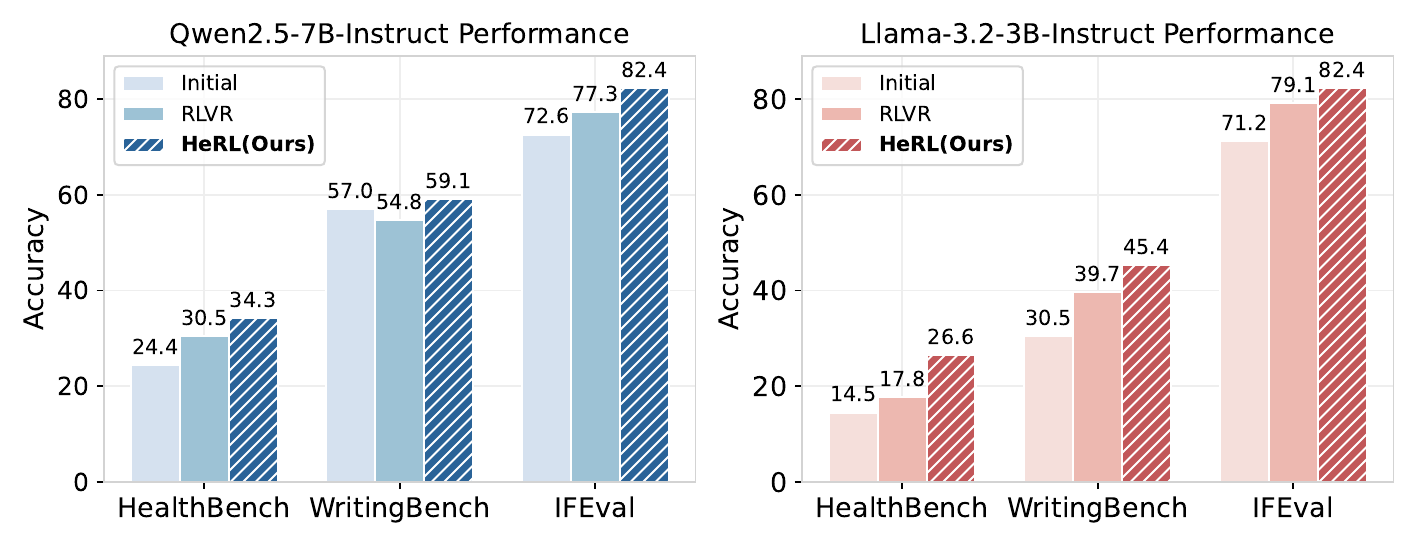}
    \end{minipage}
    \vspace{-5pt}
    \caption{ (Left) A conceptual illustration of the efficiency of exploration and the effectiveness of experience guided sampling. (Right) Model performance comparison between baselines and our proposed HeRL across different reasoning domains.}
    \label{fig:metrics}
    \vspace{-10pt}
\end{figure*}

To address this limitation, previous works typically employ reward models to assign scores that quantify how well the responses align with human judgments~\citep{Ouyangrlhf22, zhong2025comprehensive}. However, they inevitably need a large scale of human-annotated preference data for training and are prone to overfitting superficial patterns in the data, which may introduce bias and limit broader generalization~\citep{yu2025self, yu2025rewardanything}. More recently, rubric based reinforcement learning emerges as a promising way for general reasoning tasks, where checklist-style criteria are defined to assess multi-dimensional aspects of the response quality~\citep{gunjal2025rubrics, huang2025reinforcement, he2025advancedif}. By leveraging ``LLM-as-a-Judge''~\cite{li2025generation} to score each criterion and combining feedback into scalar rewards, rubrics provide a scalable and interpretable alternative that extends the standard RLVR to open-ended scenarios.

Despite the encouraging results, existing methods still struggle to effectively explore high-quality samples for learning even after numerous rollouts, particularly for hard problems. Recent attempts like structured search~\citep{hou2025treerl, zheng2025first} and intrinsic reward driven exploration~\citep{yao2025diversity, dai2025cde} can produce more diverse responses than stochastic sampling, but they lack a principled consideration of \textit{which exploration truly benefits optimization}. These approaches rely on blind trial-and-error from inherent policy distribution without knowing what constitutes desirable behaviors, potentially hindering further performance gains by failing to discover critical patterns. Under \textit{reward hypothesis}~\citep{suttonwebRLhypothesis, sutton2018reinforcement}, RL optimization can be viewed as steering the policy toward an ideal distribution that maximizes the expected rewards. This naturally leads to an intuitive question: \textbf{Can we directly bias the exploration toward high-reward responses for more efficient learning?} Fortunately, rubrics offer a promising path forward: beyond the aggregated scalar reward to indicate correctness, the model can easily obtain lesson-like experience guidance on how to discover better solutions in hindsight leveraging the language description of desired criteria (\textit{e.g.}, style, coherence). This helps the model to align exploration with optimization target, thereby facilitating more effective learning.

Building on this insight, we propose HeRL, a \textit{\textbf{\underline{H}}indsight \textbf{\underline{e}}xperience guided \textbf{\underline{R}}einforcement \textbf{\underline{L}}earning} framework to bootstrap effective exploration by explicitly \textit{telling LLMs the desired behaviors} specified in rewards. Technically, HeRL treats initial failures along with their unmet rubrics as hindsight experience, which serves as in-context guidance for the policy to explore desired improvements beyond its current distribution. This guided exploration utilizes the rich language priors from rubrics, enabling the model to efficiently generate high-quality training samples without repeated trial-and-error from scratch. To further stabilize the exploration and exploitation of desired improvements, we introduce a bonus reward that prioritizes responses exhibiting greater improvement potential, coupled with a policy shaping mechanism that encourages learning from unfamiliar yet desirable improvements. HeRL performs RL training on both the initial responses and their subsequent improvements, theoretically leading to a more accurate estimation of the expected gradient in terms of optimizing objective. Our core contributions are summarized as follows:
\begin{itemize}[leftmargin=*]
    \item We provide an in-depth analysis of \textit{what makes effective exploration} in RL for LLM reasoning, highlighting that language priors from rewards can help to align exploration with desired outcomes for efficient learning. (\S~\ref{sec:motivation})
    \item We propose HeRL, a novel RL framework that integrates hindsight experience into task instructions for guiding the discovery of desired high-quality responses, complemented by a bonus reward and policy-shaping mechanism to stabilize the exploration and exploitation. (\S~\ref{sec:method})
    \item Extensive experiments on diverse open-ended tasks demonstrate that the proposed HeRL achieves results superior to existing counterparts. Notably, HeRL can further benefit from experience-guided self-improvement during test time, yielding continued performance gains. (\S~\ref{sec:experiment})
\end{itemize}

\section{Effective Exploration in RL for LLMs}
\label{sec:motivation}
In this section, we will clarify our insight: \textit{What makes effective exploration in RL?} This stems from the classical \textit{reward hypothesis}~\citep{suttonwebRLhypothesis, sutton2018reinforcement}, where the ideal policy distribution converges to a high-reward region during training. This inspires us that effective exploration should align efforts with the optimization target. We will model and demonstrate this intuition in Section~\ref{subsec:theory} by analyzing the learning mechanisms of explored samples. After gaining insight into how different samples contribute to optimization, we will empirically investigate the efficiency of various exploration strategies in Section~\ref{subsec:empirical}.

\subsection{Theoretical Intuition}
\label{subsec:theory}
The left side of Figure~\ref{fig:metrics} illustrates the potential exploration directions in the solution space under different sampling strategies. To better understand the learning mechanisms under different explored samples, we analyze their token-level gradients. The gradient of the RL objective is (we omit \texttt{min} and \texttt{clip} operations under on-policy settings):
\begin{equation}
\nabla \mathcal{J}(\theta) = \!\!\! \underset{q \sim \mathcal{D},\, y \sim \pi_{\theta_{\text{old}}}(\cdot | q)}{\mathbb{E}} \! \left[ \sum_{t=1}^{|y|}  A_t  \nabla_\theta \log \pi_\theta(y_t | q, y_{<t}) \right] \!,
\end{equation}
where $\pi_\theta$ denotes the policy model, $y$ denotes response and $A_t$ denotes the advantage of the $t$-th token. To analyze how these gradients affect the model’s token distribution, we further examine its propagation through the logits. Let $z_v = [z_1, z_2, ..., z_{|\mathcal{V}|}]$ denote the logit corresponding to token $v$ in vocabulary $\mathcal{V}$, let $\pi_\theta(v | q, y_{<t})$ denote the probability of token $v$ calculated by softmax, \textit{i.e.}, $\pi_\theta(v | q,y_{<t})=\exp(z_v)/\sum_{v'}\exp(z_{v'})$. Then we have the following gradient descent direction:
\begin{equation}
\begin{aligned}
& \frac{\partial \log\pi_{\theta}(y_{t} | q, y_{<t}) \cdot {A}_{t}}{\partial z_v} = \\
&
\begin{cases}
 \bigl(1 - \pi_{\theta}(y_{t} | q, y_{<t})\bigr) \cdot {A}_{t} \!\! & \text{if } v = y_{t} \,\ \text{(sampled)} \\
-\pi_{\theta}(v | q, y_{<t}) \cdot {A}_{t} \!\! & \text{if } v \neq y_{t} \,\ \text{(unsampled)}
\end{cases} \!.
\label{eq:distirbution}
\end{aligned}
\end{equation}
This formulation clearly shows how different explored samples work: high-quality samples with positive advantages increase the logit of sampled tokens and decrease the logits of all unexplored tokens, whereas low-quality samples with negative advantages induce the opposite effect, raising the logits of many unexplored tokens. In the reinforced fine-tuning of LLMs, the action space can be extremely large, while only a very small subset of actions is desirable under a given state. Consequently, when exploration fails to discover sufficient high-quality samples, the dominant negative gradients will diffuse to numerous irrelevant tokens, deviating from the intended distribution. This highlights the importance of aligning exploration with desired optimization target, which leads to stable and reliable updates. The derivation of Eq.~(\ref{eq:distirbution}) can be found in Appendix~\ref{app:proof}.

\subsection{Empirical Analysis}
\label{subsec:empirical}
The analysis in Section~\ref{subsec:theory} provides an intuition for prioritizing high-quality responses during exploration. However, the comparative efficiency among different sampling strategies remain unclear. Therefore, we investigate this empirically in the following.

We randomly select 500 questions from HealthBench~\citep{arora2025healthbench} and evaluate its PassRate of different sampling stategies under the same number of attempts. Specifically, we compare three different sampling methods on Qwen2.5-7B-Instruct: (1) \textit{stochastic sampling} with random seeds, (2) \textit{entropy-based search} branching at high-entropy tokens, and (3) \textit{guided sampling} by hindsight experience.

\begin{figure}[!h]
    \centering
    \includegraphics[width=1\linewidth]{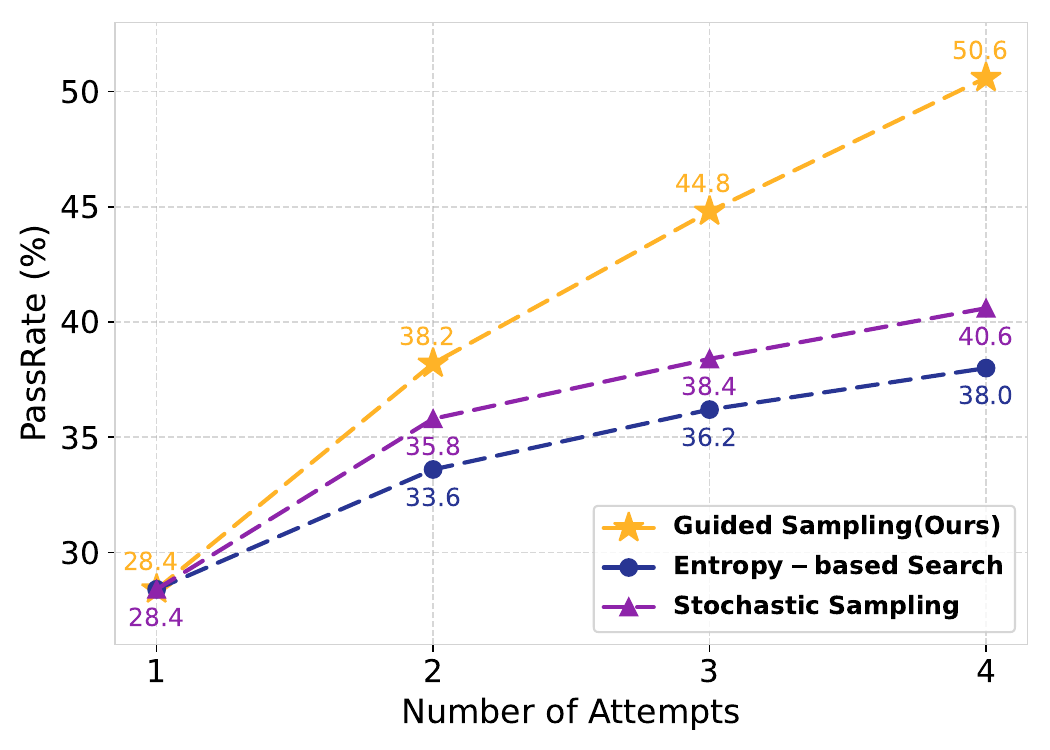}
    \vspace{-10pt}
    \caption{Performance comparison of sampling strategies. The guided sampling by hindsight experience consistently outperforms stochastic sampling and entropy-based sampling.}
    \vspace{-5pt}
    \label{fig:sampling}
\end{figure}

As shown in Figure~\ref{fig:sampling}, we find that guided sampling by hindsight experience consistently outperforms stochastic sampling and entropy-based search under the same number of attempts. Notably, entropy-based search yields an even lower PassRate than stochastic sampling, as the large vocabulary space of LLMs makes token-level branching inefficient for search with limited sampling budgets. This demonstrates that guided sampling is capable of producing more desirable samples under the same inference cost, which provides strong evidence that using hindsight experience for exploration may effectively enhance RL performance by learning from desired high-quality responses.

\begin{figure*}
    \centering
    \includegraphics[width=\textwidth]{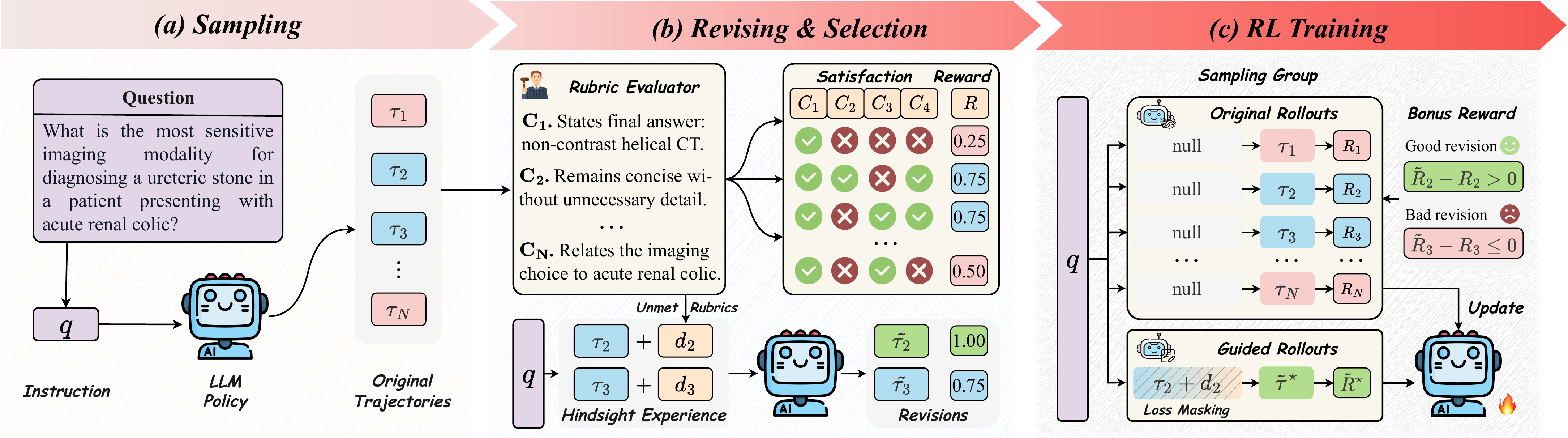}
    \caption{The overall framework of HeRL. First, we sample candidate trajectories and evaluate them using checklist-style rubrics. Then we revise failed trajectories with highest reward and preserve the best improvements. Both the original attempts and subsequent improvements are optimized using reinforcement learning, supplemented by a bonus reward to incentive responses with higher improvement potential.}
    \label{fig:framework}
    \vspace{-5pt}
\end{figure*}


\section{Methodology}
\label{sec:method}
In this section, we introduce our HeRL framework, as illustrated in Figure~\ref{fig:framework}. For each input instruction, we first sample a set of candidate trajectories from the current policy and evaluate them with verifiable rubrics to obtain both rewards and language feedback of unmet rubrics. We then treat this feedback and the original trajectory as hindsight experience to guide the policy to generate improvements that revise the unsatisfied rubrics, which shifts learning from pure scalar rewards to learning from explicit experience guidance. Finally, we use the successfully revised trajectories along with original trajectories for reinforcement learning training.

\subsection{Hindsight Experience guided Exploration}
Although revising all failed trajectories into successful ones is possible, we found it more effective to focus on the trajectories that are more likely to benefit from revision, \textit{i.e.}, failures with the highest rewards. This is motivated by the Zone of Proximal Development~(ZPD)~\citep{article}, which suggests learning is more effective when feedback targets the gap between what the model can accomplish independently and what is achievable with guidance. Low-reward trajectories are typically far from the desired behaviors, whereas high-reward trajectories lie closer to feasible solutions. Therefore, revising high-reward trajectories provides a more computationally efficient pathway for the model to break through its existing performance bottlenecks.

Given an instruction $q$, we first sample a group of $N$ trajectories from the current policy $\pi_{\theta}$:
\begin{equation}
\tau_i \sim \pi_\theta(\cdot \mid q), \quad i=1,\dots,N.
\end{equation}
We then evaluate each trajectory $\tau_i$ to obtain the scalar reward $r_i = R(\tau_i; q)$ and language feedback of unmet rubrics $d_i$, where $R(\tau_i;q)$ is computed by \textit{explicit aggregation}~\citep{gunjal2025rubrics} over checklist-style rubrics. Specifically, for a rubric set $\mathcal{C}=\{c_j\}_{j=1}^{K}$, we employ a binary indicator $\mathbb{I}(q,\tau_i,c_j)\in\{0,1\}$, which equals $1$ if $\tau_i$ satisfies the rubric item $c_j$ and $0$ otherwise. The scores of all rubric items are aggregate into a normalized weighted reward:
\begin{equation}
R(\tau_i;q) = \frac{\sum_{j=1}^{K} w_j\,\mathbb{I}(q,\tau_i,c_j)}{\sum_{j=1}^{K} w_j},
\label{eq:agg}
\end{equation}
where $w_j$ denotes the weight of the $j$-th rubric.

After that, we select a set of failed trajectories with the maximum rewards within the group for subsequent revision:
\begin{equation}
\mathcal{I}_{\text{top}} = \arg\max_{i}\big(\{r_i\}_{i=1}^{N}\big).
\end{equation}
We prompt the policy to revise each selected trajectory conditioned on the hindsight experience $\mathcal{H}_i = (\tau_i, d_i)$, which serves as in-context guidance describing how the original trajectory can be improved. This hindsight experience provides reward priors that help to preserve satisfied rubrics while refining unmet ones. The policy model then generates a revised response $\tilde{\tau}_i$ and receives a new reward $\tilde{r}_i$:
\begin{equation}
\tilde{\tau}_i \sim \pi_\theta\!\left(\cdot \mid q,\, \mathcal{H}_i \right),
\quad \tilde{r}_i = R(\tilde{\tau}_i; q), \quad i \in \mathcal{I}_{\text{top}}.
\end{equation}

Among all revised candidates, we add one that achieves the maximum reward into the sampling group:

\begin{equation}
i^{\star}=\arg\max_{i \in \mathcal{I}_{\text{top}}}  \tilde{r}_i, \quad \tilde{\tau}^{\star} \triangleq \tilde{\tau}_{i^{\star}}.
\end{equation}

In addition, we introduce a bonus reward to encourage trajectories with higher improvement potential. The underlying principle is that a failed attempt remains valuable if it serves as a high-quality stepping stone for future success. Accordingly, we reward exploration not only for its current correctness, but also for its expected contribution to future performance gains. For each selected trajectory $i\in\mathcal{I}_{\text{top}}$, after generating a revised response $\tilde{\tau}_i$ and receiving its reward $\tilde{r}_i$, we augment the original reward $r_i$ with a bonus that reflects the potential for improvement under guidance:
\begin{equation}
r_i \leftarrow r_i + \alpha\bigl(\tilde{r}_i - r_i\bigr),
\end{equation}
where $i \in \mathcal{I}_{\text{top}}$, and $\alpha$ is set to 0.05 in this work.

\subsection{Reinforcement Learning}
During RL training, we optimize the policy on a mixture of two sample types: (i) standard on-policy rollouts $\tau \sim \pi_\theta(\cdot \mid q)$, and (ii) the revised trajectory generated under hindsight experience guidance 

$\tilde{\tau}^{\star} \sim \pi_\theta(\cdot \mid q, \mathcal{H}_{i^{\star}})$. The policy learns from both the original sampling distribution and rubric-guided improvements. Our training objective is presented in the following:
\begin{equation}
\begin{aligned}
&\mathcal{J}_{\text{HeRL}}(\theta)
=
\mathbb{E}_{q \sim \mathcal{Q}, \{ \tau _i \}_{i=1}^{N} \sim \pi_{\text{old}}(\cdot | q), \tilde{\tau}^{\star} \sim \pi_{\text{old}}(\cdot | q, \mathcal{H}_{i^{\star}})}
\!\Bigg[\!
\frac{1}{N+1}
\\
& \Bigg(
\sum_{i=1}^{N} \frac{1}{|\tau_i|} \sum_{t=1}^{|\tau_i|}
\text{CLIP}\big(\rho_{i,t},\epsilon, A_i\big)  + \frac{1}{|\tilde{\tau}^{\star}|} \sum_{t=1}^{|\tilde{\tau}^{\star}|}
f\big(\tilde{\rho}_{t}\big)\,\tilde{A}
 \Bigg)
\Bigg],
\end{aligned}
\end{equation}
where $\text{CLIP}\big(\rho,\epsilon, A\big) = (\rho \cdot A, \text{clip}(\rho, 1- \epsilon, 1 + \epsilon) \cdot A)$, the advantages $A_i$ and $\tilde{A}$ are computed using the group mean with standard deviation of rewards from both the original and revised trajectories:

\begin{equation}
\begin{aligned}
& \qquad \qquad \qquad \ \mathcal{R}_{G}=\{r_i\}_{i=1}^{N} \cup \{ \tilde{r}_{i^\star} \}, \\
& A_i=\frac{r_i-\mathrm{mean}(\mathcal{R}_{G})}{\mathrm{std}(\mathcal{R}_{G})},\ \
\tilde{A}=\frac{\tilde{r}_{i^{\star}} - \mathrm{mean}(\mathcal{R}_{G})}{\mathrm{std}(\mathcal{R}_{G})}.
\end{aligned}
\end{equation}

The importance sampling ratio between the current policy and the old
policy is define as:
\begin{equation}
\rho_{i,t}=\frac{\pi_\theta(\tau_{i,t} | q, \tau_{i,<t})}{\pi_{\text{old}}(\tau_{i,t} | q, \tau_{i,<t})}, 
\tilde{\rho}_{t}=\frac{\pi_\theta(\tilde{\tau}^{\star}_{t} | q, \mathcal{H}_{i^{\star}}, \tilde{\tau}^{\star}_{<t})}{\pi_{\text{old}}(\tilde{\tau}^{\star}_{t} | q, \mathcal{H}_{i^{\star}}, \tilde{\tau}^{\star}_{<t})}.
\end{equation}
To enhance learning from low-probability tokens in revised trajectories, we further introduce policy shaping via regularized importance sampling~\citep{yan2025learning}:
\begin{equation}
f(x)=\frac{x}{x+\gamma},
\end{equation}
where we set $\gamma=1$ in all experiments.

Note that the we preserve hindsight experience when calculating the importance sampling ratio, while masking the loss for the tokens of hindsight experience during training. The masking mechanism ensures the model learns to generate rubric-satisfying improvements \emph{conditioned on} the hindsight experience, rather than learning to reproduce the feedback text or the original trajectory itself.

Overall, HeRL preserves the standard RLVR objective: the policy gradient is computed only over self-generated tokens, but operates over a richer trajectory structure that includes explicit experiential guidance. This allows the policy to reuse prior attempts to produce subsequent improvements, thereby reducing reliance on blind exploration.

\subsection{Theoretical Analysis}

In this section, we theoretically establish a connection between rubric satisfaction and the estimation of policy gradient to demonstrate the effectiveness of our method. The full proof can be found in Appendix~\ref{app:proof}.
\begin{proposition}
\label{pro:gradient}
Let $w$ denote the point weight set of all rubrics, $w_+$ denote the point weight set of satisfied rubrics, and $w_-$ denote the point weight set of unsatisfied rubrics. The expected (ideal) reward is $R_I(\tau; q) = w^\top \cdot \mathbf{1}$, and the estimated reward at step $T$ is $R_T(\tau; q) = w^\top_+ \cdot \mathbf{1}$. Assuming GRPO style updates, the gradient under expected reward and estimated reward can be expressed as $g_{R_I} = \mathbb{E}[\nabla_\theta \log \pi_\theta(\tau|q) R_I(\tau;q)]$ and $g_{R_T} = \mathbb{E}[\nabla_\theta \log \pi_\theta(\tau|q) R_T(\tau;q)]$, respectively. Then we have:
\begin{equation}
\lVert g_{R_I} - g_{R_T} \rVert_2 \leq \sqrt{\mathbb{E}\Big[\big\lVert \nabla_\theta\log_{\pi_\theta} \big\rVert^2\Big]}\lVert w_- \rVert_1.
\end{equation}
\end{proposition}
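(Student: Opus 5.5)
The plan is to write the gap between the two gradients as a single expectation and then apply Cauchy--Schwarz. Since $w$ is the disjoint union of $w_+$ and $w_-$, we have
\[
R_I(\tau;q) - R_T(\tau;q) = w^\top \mathbf{1} - w_+^\top \mathbf{1} = w_-^\top \mathbf{1} = \lVert w_- \rVert_1 .
\]
The last equality uses that rubric point weights are nonnegative; if some weights are negative (penalty rubrics), I would instead use $|w_-^\top \mathbf{1}| \le \lVert w_- \rVert_1$, which is all that is needed below. By linearity of expectation,
\[
g_{R_I} - g_{R_T} = \mathbb{E}\big[\nabla_\theta \log \pi_\theta(\tau|q)\, \Delta(\tau)\big], \qquad \Delta(\tau) := R_I(\tau;q) - R_T(\tau;q) = w_-(\tau)^\top \mathbf{1}.
\]
Note that $w_-$ depends on the trajectory $\tau$, since different responses fail different rubrics. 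So $\Delta$ is a random scalar, not a constant.

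Next I would bound the norm of this vector-valued expectation. Jensen's inequality (convexity of the norm) gives $\lVert \mathbb{E}[X]\rVert_2 \le \mathbb{E}\lVert X\rVert_2$. Applying this and then Cauchy--Schwarz on the scalar product $\lVert \nabla_\theta \log \pi_\theta\rVert_2 \cdot |\Delta|$:
\[
\lVert g_{R_I} - g_{R_T} \rVert_2 \le \mathbb{E}\big[\lVert \nabla_\theta \log \pi_\theta \rVert_2\, |\Delta(\tau)|\big] \le \sqrt{\mathbb{E}\big[\lVert \nabla_\theta \log \pi_\theta \rVert_2^2\big]}\; \sqrt{\mathbb{E}\big[\Delta(\tau)^2\big]} .
\]
Since $|\Delta(\tau)| \le \lVert w_-(\tau) \rVert_1$ pointwise, the second factor is at most $\sqrt{\mathbb{E}\lVert w_-\rVert_1^2}$. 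This is the stated $\lVert w_- \rVert_1$ whenever the unsatisfied weight set is fixed, as in the statement, which treats $w_-$ as a given set at step $T$. In general it is at most $\sup_\tau \lVert w_-(\tau)\rVert_1$.

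The GRPO-style remark only matters if one subtracts a group-mean baseline. If so, I would note that a constant baseline $b$ contributes $b\,\mathbb{E}[\nabla_\theta \log \pi_\theta] = 0$ to each gradient. The baselines therefore cancel and do not affect the difference. Division by the group standard deviation would be treated as a fixed positive scaling applied to both gradients, or ignored, as in the statement.

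The main obstacle is interpretive rather than technical. The inequality requires $\lVert w_-\rVert_1$ to be read as a (worst-case or fixed) quantity, because $w_-$ varies with $\tau$. I would make this explicit by stating the bound with $\sup_\tau \lVert w_-(\tau)\rVert_1$, or with the root-mean-square version, and then recover the displayed inequality as the case where the unsatisfied set is fixed. The conclusion to draw is that revising unmet rubrics, as HeRL does, shrinks $\lVert w_-\rVert_1$ and hence tightens the gradient estimate.
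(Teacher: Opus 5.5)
Your argument is correct and reaches the same final quantity as the paper, by a slightly different route. The paper first uses $\mathbb{E}[\nabla_\theta\log\pi_\theta(\tau|q)]=0$ to replace $X=R_I-R_T$ by its centered version $X-\mathbb{E}[X]$. Only then does it apply Cauchy--Schwarz, obtaining $\sqrt{\mathbb{E}[\lVert\nabla_\theta\log\pi_\theta\rVert^2]}\,\sqrt{\mathrm{Var}(X)}$. You skip the centering and go directly through Jensen and scalar Cauchy--Schwarz to $\sqrt{\mathbb{E}[\Delta^2]}$.

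The centering gives a sharper intermediate bound. It shows the gap is controlled by how much the unmet weight \emph{varies} across trajectories, and that the gap vanishes when $w_-$ is the same for every $\tau$. So in the fixed-$w_-$ case you use to recover the displayed inequality, the left-hand side is actually $0$ and the bound holds trivially.

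The paper does not keep this refinement. It writes $\sqrt{\mathrm{Var}(X)}=\sqrt{\mathbb{E}[X^2]}$, which is valid only as $\le$ (equality needs $\mathbb{E}[X]=0$). It then identifies $\sqrt{\mathbb{E}[X^2]}$ with $\lVert w_-\rVert_1$, which requires exactly the worst-case or root-mean-square reading of $w_-(\tau)$ that you make explicit. Your version is more elementary and avoids both slips. For the strongest clean statement, combine the two: center first as the paper does, then use $\sqrt{\mathrm{Var}(X)}\le\sqrt{\mathbb{E}[X^2]}\le\sup_\tau\lVert w_-(\tau)\rVert_1$.
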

\begin{remark}
Proposition~\ref{pro:gradient} reveals that the difference between the ideal and estimated gradient is upper-bounded by $\lVert w_- \rVert_1$ times the expected squared norm of the policy score function. Augmenting the responses to better satisfy the given rubrics leads to better estimation of the ideal gradient, thereby enhancing the stability and efficiency during training. This provides a theoretical explanation for the effectiveness of HeRL: by tightening the upper bound on unmet rubrics, it yields a more reliable gradient direction and, in turn, improves learning efficiency and final performance.
\end{remark}

\begin{table*}[!t]
\caption{Experimental results ($\%$) across diverse domains using different LLMs trained on the same data. Best results among all methods are marked as \textbf{bolded}, and arrows indicate \textcolor{OliveGreen}{improvement} or \textcolor{Maroon}{degradation} over the initial model. Note Writing* tasks are not in training data.}
\centering
\resizebox{\textwidth}{!}{
\begin{tabular}{l|cccccc}
\toprule
\multirow{2}{*}{\textbf{Method}} & \multicolumn{3}{c}{\textbf{Instruction Following}} & \multicolumn{1}{c}{\textbf{Writing$^*$}} & \multicolumn{2}{c}{\textbf{Medical}} \\
\cmidrule(lr){2-4} \cmidrule(lr){5-5} \cmidrule(lr){6-7}
 & \textbf{IFEval} & \textbf{IFBench} & \textbf{MulDimIF} & \textbf{WritingBench} & \textbf{LLMEval-Med} & \textbf{HealthBench-500} \\
\midrule
Qwen2.5-7B-Instruct
& 72.6 \basex{0.0} & 26.2 \basex{0.0} & 51.4 \basex{0.0} & 57.0 \basex{0.0} & 56.0 \basex{0.0} & 24.4 \basex{0.0} \\
\hspace*{1em}+ SFT
& 75.6 \uprise{3.0} & 27.9 \uprise{1.7} & 67.8 \uprise{16.4} & 51.5 \down{5.5} & 34.8 \down{21.2} & 27.2 \uprise{2.8} \\
\hspace*{1em}+ DPO
& 66.9 \down{5.7} & 25.9 \down{0.3} & 56.5 \uprise{5.1} & 52.1 \down{4.9} & 35.8 \down{20.2} & 28.0 \uprise{3.6} \\
\hspace*{1em}+ RLVR
& 77.3 \uprise{4.7} & 31.6 \uprise{5.4} & 73.5 \uprise{22.1} & 54.8 \down{2.2} & 60.5 \uprise{4.5} & 30.5 \uprise{6.1} \\
\rowcolor{blue!10}
\hspace*{1em}+ \textbf{HeRL (Ours)}
& \textbf{82.4} \uprise{9.8} & \textbf{39.7} \uprise{13.5} & \textbf{83.4} \uprise{32.0} & \textbf{59.1} \uprise{2.1} & \textbf{65.0} \uprise{9.0} & \textbf{34.3} \uprise{9.9} \\

\midrule
Llama-3.2-3B-Instruct
& 71.2 \basex{0.0} & 23.8 \basex{0.0} & 35.8 \basex{0.0} & 30.5 \basex{0.0} & 16.1 \basex{0.0} & 14.5 \basex{0.0}\\
\hspace*{1em}+ SFT
& 73.0 \uprise{1.8} & 24.8 \uprise{1.0} & 66.9 \uprise{31.1} & 24.5 \down{6.0} & 15.1 \down{1.0} & 21.0 \uprise{6.5}\\
\hspace*{1em}+ DPO
& 74.3 \uprise{3.1} & 22.1 \down{1.7} & 54.4 \uprise{18.6} & 14.4 \down{16.1} & 11.5 \down{4.6} & 13.5 \down{1.0}\\
\hspace*{1em}+ RLVR
& 79.1 \uprise{7.9} & 26.6 \uprise{2.8} & 77.6 \uprise{41.8} & 39.7 \uprise{9.2} & 18.5 \uprise{2.4} & 17.8 \uprise{3.3}\\
\rowcolor{blue!10}
\hspace*{1em}+ \textbf{HeRL (Ours)}
& \textbf{82.4} \uprise{11.2} & \textbf{30.6} \uprise{6.8} & \textbf{84.7} \uprise{48.9} & \textbf{45.4} \uprise{14.9} & \textbf{18.7} \uprise{2.6} & \textbf{26.6} \uprise{12.1}\\

\midrule
Qwen3-4B-Instruct-2507
& 83.4 \basex{0.0} & 29.9 \basex{0.0} & 57.3 \basex{0.0} & 84.3 \basex{0.0} & 74.5 \basex{0.0} & 42.0 \basex{0.0} \\
\hspace*{1em}+ SFT
& 83.4 \basex{0.0} & 31.3 \uprise{1.4} & 66.8 \uprise{9.5} & 81.7 \down{2.6} & 73.3 \down{1.2} & 36.0 \down{6.0} \\
\hspace*{1em}+ DPO
& 83.9 \uprise{0.5} & 27.9 \down{2.0} & 61.5 \uprise{4.2} & 85.0 \uprise{0.7} & 74.9 \uprise{0.4} & 39.1 \down{2.9} \\
\hspace*{1em}+ RLVR
& 85.8 \uprise{2.4} & 36.9 \uprise{7.0} & 79.0 \uprise{21.7} & 83.9 \down{0.4} & 78.1 \uprise{3.6} & 41.7 \down{0.3} \\
\rowcolor{blue!10}
\hspace*{1em}+ \textbf{HeRL (Ours)}
& \textbf{86.1} \uprise{2.7} & \textbf{39.7} \uprise{9.8} & \textbf{82.5} \uprise{25.2} & \textbf{85.7} \uprise{1.4} & \textbf{79.3} \uprise{4.8} & \textbf{43.6} \uprise{1.6} \\
\bottomrule
\end{tabular}}
\label{tb:main}
\end{table*}

\section{Experiments}
Our experiments seek to answer the following research questions: (1) Does HeRL deliver consistent performance gains over baselines, including SFT, DPO, and RLVR?~(Table~\ref{tb:main}) (2) Whether HeRL improves task performance without sacrificing out-of-domin generalization?~(Table~\ref{tab:ablation}) (3) Can HeRL extend the model’s effective capability boundary and enable experience-guided self-improvement at test time?~(Figure~\ref{fig:casade}) (4) How effectively HeRL sustains exploration throughout training relative to the RLVR?~(Figure~\ref{fig:dynamic})

\label{sec:experiment}
\subsection{Experimental Setup}
\textbf{Datasets and Benchmark.}
To assess whether HeRL improves LLMs’ general reasoning capabilities, we evaluate it on three reasoning domains. 
\textbf{(1) instruction-following} (IFEval~\citep{zhou2023instruction}, IFBench~\citep{zhang2025if}, and MulDimIF~\citep{ye2025multi}), 
\textbf{(2) writing} (WritingBench~\citep{wu2025writingbench}), and 
\textbf{(3) medicalQA} (LLMEval-Med~\citep{zhang2025llmeval} and HealthBench-500~\citep{arora2025healthbench}). 
We train with two data sources: instruction-following capabilities are learned from HIR-16K~\citep{zhang2025replay}, while medical capability is trained on the full RaR-Medicine dataset~\citep{gunjal2025rubrics}, preserving its original rubrics. We do not train on writing tasks, as they resemble instruction-following and thus can probe HeRL's close-domain generalization capabilities. Moreover, we evaluate HeRL on out-of-distribution (OOD) datasets, including MATH-500~\citep{lightman2023let}, GPQA~\citep{rein2024gpqa}, and MMLU-Pro~\citep{mmlupro2024wang}. The detailed dataset information is provided in Appendix~\ref{app:dataset}.

\textbf{Baselines and Evaluation Metrics.}
We compare HeRL with three baselines. (1) SFT: for the instruction-following and writing subsets, we synthesize targets using GPT-5, while for RAR-Medicine we directly use the dataset-provided references. (2) 
DPO~\citep{rafailov2023direct}: we build preference pairs by using Qwen2.5-7B-Instruct~\citep{team2024qwen2} to generate rejected responses and GPT-5 to produce the corresponding chosen responses.

(3) RLVR~\citep{lambert2024tulu, guo2025deepseek}: we optimize verifiable rewards defined at the rubric level, with the final scalar reward computed by explicit aggregation as in Eq~\eqref{eq:agg}. Detailed training hyperparameters are in Appendix~\ref{app:implementation}.

\textbf{Models and Configurations.}
We train and compare models on different architectures and scales, including Qwen2.5-7B-Instruct~\citep{team2024qwen2}, Llama-3.2-3B-Instruct~\citep{meta2024llama} , Qwen3-4B-Instruct-2507~\citep{yang2025qwen3}. We implement RLVR and our HeRL algorithm on top of the verl~\citep{sheng2025hybridflow} training framework, and use LLaMA-Factory~\citep{zheng2024llamafactory} for SFT and DPO training. For RLVR, we sample 8 responses per instruction; for HeRL, we sample 7 rollouts and generate 1 hindsight experience guided trajectory. All rubric evaluations are performed by GPT-4o mini~\citep{hurst2024gpt}, which serves as the judge model in our experiments. The full evaluation prompt used for this process is provided in Appendix~\ref{app:eval_prompt}.

\subsection{Main Results}
\label{subsec:main_result}
\textbf{HeRL outperforms existing baselines.}
As shown in Table~\ref{tb:main}, HeRL demonstrates strong performance across model families and scales. Specifically, based on Qwen2.5-7B-Instruct, Llama-3.2-3B-Instruct and Qwen3-4B-Instruct-2507, HeRL achieves the best results across all six benchmarks. Compared with standard RLVR, HeRL delivers further improvements, indicating that hindsight-guided revisions provide a stronger driver for exploration. In contrast to SFT and DPO, HeRL is also less prone to performance degradation, yielding more stable performance overall. Notably, although our training data does not include any writing tasks, HeRL still improves on WritingBench, whereas other baselines more often suffer cross-domain drops.

\begin{table}[h!]
\centering
\small
\caption{Performance of HeRL on out-of-domain benchmarks.}
\resizebox{1 \linewidth}{!}{
\begin{tabular}{l|lll}
\toprule
Model & \textbf{MATH-500} & \textbf{GPQA} & \textbf{MMLU-Pro} \\
\midrule
Llama-3.2-3B-Instruct & 38.6 &26.5  &33.9  \\
\rowcolor{blue!10}
\textbf{\hspace*{1em}+ HeRL (Ours)} &  36.6 \down{2.0} & 29.2 \uprise{2.7} &  34.8 \uprise{0.9} \\
\midrule
Qwen2.5-7B-Instruct & 76.0 &34.1  &57.3  \\
\rowcolor{blue!10}
\textbf{\hspace*{1em}+ HeRL (Ours)} & 77.6 \uprise{1.6} &33.9 \down{0.2} &  55.5 \down{1.8} \\
\midrule
Qwen3-4B-Instruct-2507 &86.0  &46.6  &61.5  \\
\rowcolor{blue!10}
\textbf{\hspace*{1em}+ HeRL (Ours)} &89.8  \uprise{3.8} &45.3  \down{1.3} & 62.1 \uprise{0.6} \\
\bottomrule
\end{tabular}}%
\label{tab:generalization}
\label{tb:ood}
\end{table}

\textbf{HeRL maintains OOD reasoning ability.} To further assess whether HeRL could hurt out-of-distribution (OOD) generalization, we additionally evaluate the model on three OOD reasoning benchmarks that are only weakly related to medical or instruction-following tasks. As shown in the Table~\ref{tab:generalization}, despite being trained solely on medical and instruction-following data, the HeRL-trained model performs comparably to the original base model on these OOD benchmarks, showing no obvious collapse or generalization drop. These results indicate that HeRL improves exploration efficiency under rubric-based training while preserving broad reasoning capability and cross-domain robustness.

\begin{figure*}[!t]
    \centering
    \includegraphics[width=\textwidth]{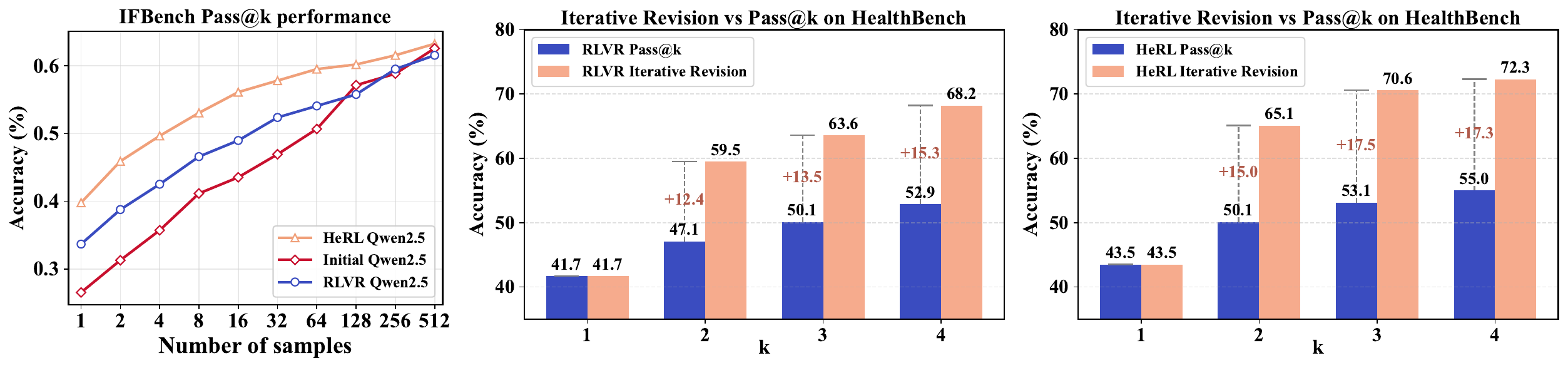}

    \vspace{-2pt}
    \begin{minipage}[t]{0.37\textwidth}
        \centering
        \small (a) Pass@k performance.
    \end{minipage}%
    \begin{minipage}[t]{0.63\textwidth}
        \centering
        \small (b) Iterative self-improvement under experience guidance.
    \end{minipage}

    \vspace{-2pt}
    \caption{Investigation of HeRL’s sampling efficiency. (a) Pass@k performance of Qwen2.5-7B-Instruct, RLVR, and HeRL on IFBench. (b) Iterative revision with experience guidance further outperforms Pass@k for both RLVR and HeRL on HealthBench-500 with Qwen3-4B-Instruct-2507.}
    \label{fig:casade}
    \vspace{-15pt}
\end{figure*}

\textbf{HeRL improves sampling efficiency and reasoning ability boundary.} Beyond Pass@1, we evaluate HeRL and RLVR under increasing sampling budgets $k$ (Pass@$k$). As shown in Figure~\ref{fig:casade} (a), HeRL outperforms RLVR across all $k$, achieves stronger performance at small $k$ (higher sampling efficiency), and maintains its advantage as $k$ grows, indicating a higher attainable performance.

\textbf{HeRL can be further scaled at test time under experience guided self-improvement.}
To evaluate whether hindsight experience guidance for exploration in training time transfers to test time, we examine whether the model can iteratively improve its responses under guidance during inference. We conduct iterative guided revision on HealthBench. Starting from Round 2, at each sampling round, we construct hindsight experience using the original trajectory and unmet rubrics for guidance as in training. The hindsight experience is fed to the model to generate a revised trajectory for the next round. As shown in Figure~\ref{fig:casade} (b), HeRL with iterative guided revision outperforms HeRL with pass@k sampling, suggesting that hindsight experience guidance learned in training transfers to more effective revision during inference. Moreover, compared with RLVR, HeRL achieves a larger improvement when guidance is introduced, suggesting that this inference-time scaling capability does not stem solely from the model’s in-context learning ability, but also benefits from the experience-guided revision capability internalized during training.

\begin{table}[!t]
\caption{Ablation study of HeRL, where NaiveHE denotes naive Hindsight Experience, HE refers to Hindsight Experience, and BR is Bonus Reward. The arrows indicate the performance gains.}
\label{tab:ablation}
\centering
\resizebox{1 \linewidth}{!}{
\begin{tabular}{l|ccc}
\toprule
Setting & \textbf{IFBench} &  \textbf{WritingBench} & \textbf{HealthBench-500}  \\
\midrule

\rowcolor{gray!10}
\multicolumn{4}{c}{Model I: Qwen2.5-7B-Instruct} \\
\midrule

Baseline (RLVR)
& 31.6 \basex{0.0} & 54.8 \basex{0.0} & 30.5 \basex{0.0}  \\
+ NaiveHE
& 32.3 \uprise{0.7} & 52.7 \down{2.1} & 28.8 \down{1.7}  \\
+ HE
& 36.7 \uprise{5.1} & 58.9 \uprise{4.1} & 31.8 \uprise{1.3} \\
\rowcolor{blue!10}
+ HE + BR(\textbf{HeRL})
& \textbf{39.7} \uprise{8.1} & \textbf{59.1} \uprise{4.3} & \textbf{34.3} \uprise{3.8} \\
\midrule

\rowcolor{gray!10}
\multicolumn{4}{c}{Model II: Llama-3.2-3B-Instruct} \\
\midrule
Baseline (RLVR)
& 26.6 \basex{0.0} & 39.7 \basex{0.0} & 17.8 \basex{0.0}  \\
+ NaiveHE
& 28.3 \uprise{1.7} & 33.7 \down{6.0} & 23.9 \uprise{6.1}  \\
+ HE
& 28.1 \uprise{1.5} & 41.1 \uprise{1.4} & 23.5 \uprise{5.7} \\
\rowcolor{blue!10}
+ HE + BR(\textbf{HeRL})
& \textbf{30.6} \uprise{4.0} & \textbf{45.4} \uprise{5.7} & \textbf{26.6} \uprise{8.8} \\
\midrule

\rowcolor{gray!10}
\multicolumn{4}{c}{Model III: Qwen3-4B-2507-Instruct} \\
\midrule
Baseline (RLVR)
& 36.9 \basex{0.0} & 83.9 \basex{0.0} & 41.7 \basex{0.0}  \\
+ NaiveHE
& 35.7 \down{1.2} &  66.2 \down{17.7} & 40.9 \down{0.8}  \\
+ HE
& 37.7 \uprise{0.8} & 85.3 \uprise{1.4} & 43.8 \uprise{2.1} \\
\rowcolor{blue!10}
+ HE + BR(\textbf{HeRL})
& \textbf{39.7} \uprise{2.8} & \textbf{85.7} \uprise{1.8} & \textbf{43.6} \uprise{1.9} \\
\bottomrule
\end{tabular}%
}
\vspace{-15pt}
\end{table}

\subsection{Ablation Study}
\label{subsec:ablation}
To validate the effectiveness of our two key components—\textit{Hindsight Experience} (\textbf{HE}) and \textit{Bonus Reward} (\textbf{BR}), we conduct ablation studies on three representative datasets from different domains. We compare four settings: \textbf{(1) RLVR}, the baseline; \textbf{(2) + NaiveHE}, which directly adds the original trajectory with the revised response and trains on \(\big(q,\tilde{\tau}^{\star}\big)\) without preserving the hindsight experience; \textbf{(3) + HE}, which preserves hindsight experience while masking its loss in training; and \textbf{(4) + HE + BR}, i.e., full HeRL with bonus rewards to incentivize responses with higher improvement potential under guidance.
As shown in Table~\ref{tab:ablation}, unlike NaiveHE, which directly reuses revised responses, standard HE consistently improves performance, indicating that retaining hindsight experience is critical for optimization process. Notably, NaiveHE even leads to performance drops in some cases, likely because it treats revised samples as off-policy training data, which introduces training instability. Adding BR on top of HE yields further performance gains, suggesting that the bonus reward improves exploration and thereby leads to better overall performance.

\subsection{Analysis}
\begin{figure*}[t]
    \centering
    \includegraphics[width=\textwidth]{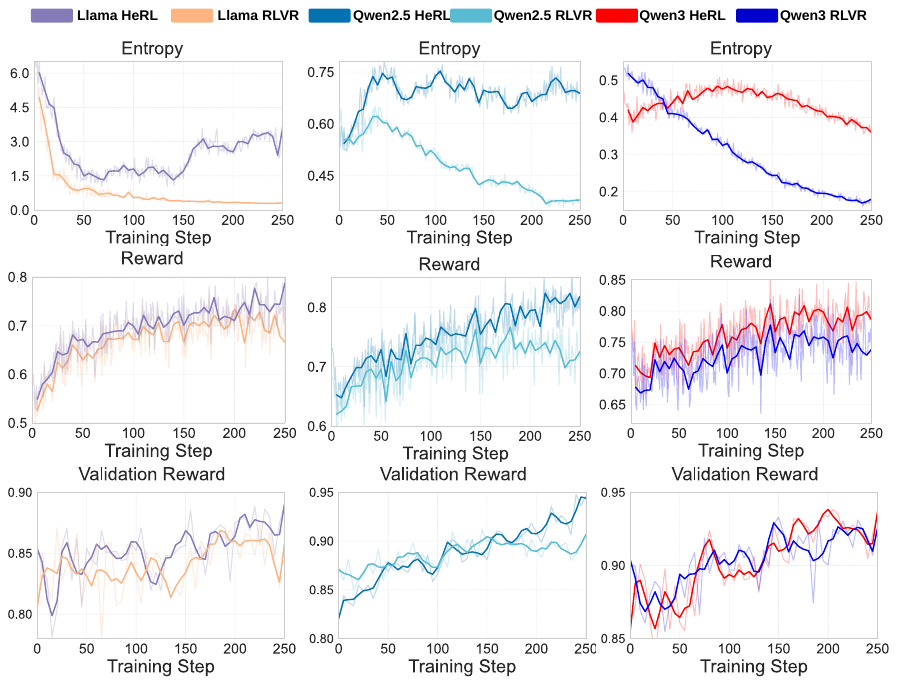}
    \caption{Training dynamics of different models. (Top) Entropy, (Mid) Reward and (Bottom) Validation Reward curves over training steps on the RAR-Medicine dataset are reported for three base models, comparing HeRL with RLVR baseline. Model names are abbreviated in the plots: Qwen2.5 denotes Qwen2.5-7B-Instruct, Qwen3 denotes Qwen3-4B-Instruct-2504, and Llama denotes Llama3.2-3B-Instruct.}
    \vspace{-15pt}
    \label{fig:dynamic}
\end{figure*}

\textbf{Training Dynamics.} As shown in Figure~\ref{fig:dynamic}, HeRL exhibits more effective and stable training dynamics across different models. In the top row, HeRL maintains consistently higher sampling entropy throughout training. In contrast, RLVR shows an early drop in entropy and stays at a low-entropy regime, indicating an early contraction of the sampling distribution. By keeping higher entropy, HeRL avoids premature contraction and continues to generate diverse candidates with improvement potential. Consistent with this pattern, the mid and bottom rows show that HeRL achieves higher rewards than RLVR, attains higher validation rewards, and retains a clearer advantage in later training. Overall, HeRL mitigates exploration collapse and converts exploration into sustained performance gains.\label{subsec:dynamics}

\section{Related Work}
\textbf{Exploration in LLM Reinforcement Learning.}
Although reinforcement learning has significantly improved LLM's problem-solving capabilities, they often suffer from diversity collapse\cite{song2025outcome}. Effective exploration is therefore crucial to prevent the policy from being trapped in local optima. Recent works have taken two main directions. The first focuses on \textbf{structured exploration}; methods such as TreeRL\cite{hou2025treerl} and RFTT\cite{zhang2025reasoning} leverage tree-based search structures to systematically explore the reasoning space. FR3E\cite{zheng2025first} first identifies high-uncertainty decision points and then performs targeted rollouts instead of full rollouts. RuscaRL~\citep{zhou2025breaking} injects the rubrics into instruction, but it still suffers from ineffective guidance and off-policy updates. The second line of research leverages \textbf{intrinsic rewards}. Various signals such as solution diversity\cite{yao2025diversity}, curiosity\cite{dai2025cde}, entropy\cite{cheng2025reasoning}, and count-based reward\cite{zhang2025count} are used to encourage unconventional but potentially optimal trajectories. However, current methods primarily focus on improving the efficiency and diversity of exploration, yet rarely uncover trajectories that exceed the capabilities of the underlying model. In contrast, HeRL leverages Hindsight Experience to guide the policy toward exploring desired responses that lie beyond the current data distribution.

\textbf{LLM Reinforcement Learning with Rubric Reward.}
Reinforcement learning with verifiable reward signals (RLVR) has become a prominent paradigm for enhancing large language models (LLMs). Recent efforts have demonstrated effective verification mechanisms in a range of domains, including mathematical reasoning~\cite{guo2025deepseek,lambert2024tulu,wang2025reinforcement}, code~\cite{rastogi2025magistral,liu2025understanding,wen2025reinforcement} and instruction following~\cite{guo2025ifdecorator,qin2025incentivizing,peng2025verif}. However, applying RLVR to open-ended reasoning tasks remains difficult, as they cannot be evaluated by discrete, rule-defined criteria. Recent work has extended RLVR to rubric-based rewards to checklist-style judgements generated by LLM. Pioneering approaches such as RaR~\cite{gunjal2025rubrics} and \citet{huang2025reinforcement} have applied this framework to general reasoning domains, while AdvancedIF~\cite{he2025advancedif} employs it for instruction following. In contrast to these methods, which use rubric-based rewards primarily as reward signals for RL, we further leverage them to guide exploration to extend the boundary of model's capability.

\section{Conclusion}
This work proposes HeRL, a framework designed to bootstrap \textit{effective exploration} in RL for LLM reasoning guided by hindsight experience. Leveraging the language description from rubrics and in-context learning abilities of LLMs, HeRL enables the model to efficiently generate desired high-quality responses for learning without blind trial-and-error from scratch. To further stabilize the process of exploration and exploitation, we introduce a bonus reward to incentivize responses with greater improvement potential, and a policy shaping mechanism to effectively learn from unfamiliar but desirable improvements. Extensive experiments demonstrate that HeRL consistently outperforms strong baselines and can further benefit from experience-guided self-improvement at test time.

While HeRL demonstrates clear gains in exploration efficiency and final performance, it has several limitations. First, it depends on high-quality, broad-coverage rubric datasets, which remain scarce and expensive to create. Second, the rubrics are predefined and static. Since the model’s capability boundary evolves over time, fixed rubrics may not always target the most informative learning gap. Future work may explore adaptive rubrics that evolve with training, for example by adjusting difficulty based on the model’s weaknesses, to further improve sample efficiency and stability.

\bibliography{main}
\bibliographystyle{icml2026}

\newpage
\appendix
\onecolumn

\section{Implementation Details}
\label{app:implementation}
All experiments run on 8×A100-80GB GPUs. We use LLaMA-Factory~\citep{zheng2024llamafactory} for SFT and DPO training, and verl~\citep{sheng2025hybridflow} framework for RL training. Detailed training configurations of SFT and DPO are presented in Table~\ref{tab:config_sft}, and the training configurations of RL are presented in Table~\ref{tab:config_rl}.

\begin{table}[htbp]
\centering  
\vspace{-5pt}
\caption{Training configurations across different methods and model backbones.}
\begin{subtable}{\linewidth}
\subcaption{Training configuration of SFT and DPO.}
\label{tab:config_sft}
\begin{tabular}{@{}>{\bfseries}l p{12cm}@{}}
\toprule
Method & SFT, DPO \\
\midrule
Training
& per\_device\_train\_batch\_size = 16, gradient\_accumulation\_steps = 16 \\
& learning\_rate = 1e-6, lr\_scheduler\_type = constant \\
& cutoff\_len = 4096, warmup\_steps = 10, epochs = 5 \\
\midrule
Optimizations
& deepspeed: z3, bf16 \\
\bottomrule
\end{tabular}
\centering
\end{subtable}
\vspace{1em}
\begin{subtable}{\linewidth}
\vspace{5pt}
\subcaption{Training configuration of RL.}
\label{tab:config_rl}
\begin{tabular}{@{}>{\bfseries}l p{12cm}@{}}
\toprule
Settings & Hyperparameters \\
\midrule
HeRL & rollout\_n = 7, Hindsight experience = 1, $\alpha$ = 0.05  \\
\midrule
RLVR & rollout\_n = 8 \\
\midrule
Sampling
& top\_k = -1, top\_p = 1.0, temperature = 1.0 \\
& max\_prompt\_length = 2,048, max\_response\_length = 4,096 \\
\midrule
Training
& ppo\_mini\_batch\_size = 64, ppo\_micro\_batch\_size\_per\_gpu = 8 \\
& log\_prob\_micro\_batch\_size\_per\_gpu = 8 \\
& learning\_rate = 1e-6, kl\_loss\_coef = 1e-4, epochs = 5 \\
\midrule
Optimizations
& param\_offload, flash\_attn, bf16 \\

\bottomrule
\end{tabular}
\centering
\end{subtable}
\vspace{-15pt}
\end{table}

\section{Dataset}
\label{app:dataset}
\subsection{Training Dataset}
We select two rubrics datasets to train HeRL and to validate the effectiveness of hindsight experience: HIR-16K and RAR-Medicine. Both datasets provide fine-grained, verifiable evaluation criteria.

\textbf{HIR-16K~\citep{zhang2025replay}} is designed specifically for hindsight rewriting with decomposable, multi-constraint instructions. It contains 16,969 queries collected from public instruction-following sources, where each instruction is decomposed into atomic constraints and filtered to retain sufficiently complex cases. The final dataset includes 76,456 hard constraints and 46,536 soft constraints, supporting training and evaluation under dense constraint settings.

\textbf{RAR-Medicine~\citep{gunjal2025rubrics}} is a medical reasoning dataset comprising 20K prompts curated from diverse medical reasoning sources, including medical-o1-reasoning-SFT, Natural Reasoning, SCP-116K, and GeneralThought-430K. To enable reliable scoring, RAR-Medicine provides instance-specific rubrics, which are automatically generated using GPT-4o, allowing each response to be evaluated against prompt-specific criteria.
\subsection{Evaluation Dataset}
\textbf{IFEval~\citep{zhou2023instruction}} is a benchmark designed to measure how well models can strictly follow detailed instructions. It contains 541 instruction instances spanning 25 categories of verifiable constraints. Each prompt includes one or more verifiable instructions, and the evaluation relies on rubric-style checks such as required keyword inclusion and keyword frequency, alongside other constraint satisfaction criteria.

\textbf{IFBench~\citep{zhang2025if}} introduces 58 new, diverse, and challenging verifiable constraints, and provides 294 instruction instances in total. Its rubrics emphasize precise compliance, including checks like output formatting requirements and word-count or length-related constraints, enabling automatic and objective evaluation of instruction-following robustness beyond seen constraint types.

\textbf{MuIDimIF~\citep{ye2025multi}} performs constraint expansion, conflict detection, and instruction rewriting, producing 1,200 code-verifiable samples. The resulting rubrics cover a broad range of constraints, including format, language selection, and length control, with programmatic validators used to determine whether the model output satisfies requirements.

\textbf{WritingBench~\citep{wu2025writingbench}} is a comprehensive benchmark for assessing LLM writing performance across 6 core writing domains and 100 subdomains. It contains 1,000 instructions paired with detailed scoring rubrics. In addition to fixed rubrics, WritingBench proposes a query-dependent evaluation framework that allows an LLM to dynamically generate instance-specific assessment criteria tailored to each prompt.

\textbf{LLMEval-Med~\citep{zhang2025llmeval}} is an open-source benchmark designed to measure both the performance and safety of LLMs in healthcare settings. It consists of 5,000 multi-turn conversations between a model and either an individual user or a healthcare professional, reflecting realistic, open-ended clinical interactions. Evaluation is based on 48,562 unique rubric criteria covering a wide range of healthcare contexts (e.g., acute and non-acute scenarios) and behavioral dimensions (e.g., accuracy, instruction adherence, and communication quality), enabling fine-grained assessment of both correctness and safety-relevant behavior.

\textbf{HealthBench~\citep{arora2025healthbench}} is a medical benchmark that evaluates LLMs across multiple core medical competencies, including knowledge, understanding, reasoning, safety, and medical text generation. It contains 2,996 questions derived from real-world electronic health records and expert-designed clinical scenarios, aiming to reflect authentic clinical decision-making and documentation needs. It also provides an automated evaluation pipeline that incorporates expert-developed checklists into an LLM-as-Judge framework, improving scoring consistency by grounding judgments in explicit, domain-informed criteria.

\section{Proof}
\label{app:proof}
\subsection{Derivation of Eq. (\ref{eq:distirbution})}
For the token $y_t$, its gradient simplifies to:
\begin{equation}
\begin{aligned}
& \frac{\partial \log\pi_{\theta}(y_{t} \mid q, y_{<t})\, A_t}{\partial z_v} \\
& =
\frac{\partial \pi_{\theta}(y_{t} \mid q, y_{<t})}{\partial z_v} \cdot \frac{A_t}{\pi_{\theta}(y_{t} \mid q, y_{<t})} \\
& =
\frac{\mathbb{I}(v = y_{t}) \exp(z_{y_{t}}) \sum_{v' \in \mathcal{V}} \exp(z_{v'}) - \exp(z_{y_{t}}) \exp(z_{v})}{\left( \sum_{v' \in \mathcal{V}} \exp(z_{v'})\right)^2} \cdot \frac{A_t}{\pi_{\theta}(y_{t} \mid q, y_{<t})}
\\
& =
\begin{cases}
 \bigl(1 - \pi_{\theta}(y_{t} \mid q, y_{<t})\bigr) \cdot \widehat{A}_{t} & \text{if } v = y_{t} \quad \text{(sampled token)} \\
-\pi_{\theta}(v \mid q, y_{<t}) \cdot \widehat{A}_{t} & \text{otherwise} \quad \text{(unsampled token)}
\end{cases}.
\label{eq:pos_neg}
\end{aligned}
\end{equation}
\subsection{Proof of Proposition~\ref{pro:gradient}}
\begin{proof}
The discrepancy between the ideal gradient and the estimated gradient is:
\begin{align*}
    g_{R_I} - g_{R_T} &= \mathbb{E}_{q \sim \mathcal{D}, \tau \sim \pi_\theta(\cdot \mid q)} \Big[\nabla_\theta \log\pi_\theta(\tau|q) \big(R_I-R_T\big)\Big] \\
    &= \mathbb{E}_{q \sim \mathcal{D}, \tau \sim \pi_\theta(\cdot \mid q)} \Big[\nabla_\theta \log\pi_\theta(\tau|q) \big(X - \mathbb{E}_{q \sim \mathcal{D}, \tau \sim \pi_\theta(\cdot \mid q)}\big[X\big] \big)\Big],
\end{align*}
where $X = R_I - R_T$. This derivation is obtained by the fact that $\mathbb{E}_{q \sim \mathcal{D}, \tau \sim \pi_\theta(\cdot \mid q)}\big[\nabla_\theta \log\pi_\theta(\tau|q)\big] = 0$, so we can center $X$ without changing the expectation. Then we have
\begin{align*}
    \Big\lVert g_{R_I} - g_{R_T} \Big\rVert_2 &= \Big\lVert \mathbb{E}_{q \sim \mathcal{D}, \tau \sim \pi_\theta(\cdot \mid q)} \Big[\nabla_\theta \log\pi_\theta(\tau|q) \big(X - \mathbb{E}_{q \sim \mathcal{D}, \tau \sim \pi_\theta(\cdot \mid q)}\big[X\big]\big)\Big]\Big\rVert_2 \\
    &\leq \sqrt{\mathbb{E}\Big[\big\lVert \nabla_\theta\log_{\pi_\theta} \big\rVert^2\Big]}\sqrt{Var(X)} \qquad \text{by Cauchy-Schwarz} \\
    &= \sqrt{\mathbb{E}\Big[\big\lVert \nabla_\theta\log_{\pi_\theta} \big\rVert^2\Big]}\sqrt{Var(R_I - R_T)} \\
    &= \sqrt{\mathbb{E}\Big[\big\lVert \nabla_\theta\log_{\pi_\theta} \big\rVert^2\Big]}\sqrt{\mathbb{E}\big[(R_I - R_T)^2\big]} \\
    &= \sqrt{\mathbb{E}\Big[\big\lVert \nabla_\theta\log_{\pi_\theta} \big\rVert^2\Big]}\big\lVert w_-\big\rVert_1.
\end{align*}
\end{proof}

\section{Prompt for Hindsight Experience Guided Exploration}
Here is the revise prompt we use.

\begin{tcolorbox}[
  colback=green!2!white,
  colframe=green!50!black,
  title=Rar-Medicine Revise Prompt Template,
  fonttitle=\bfseries,
  boxrule=0.8pt,
  arc=2mm,
  breakable
]
You are a revise assistant. Keep satisfied rubrics intact and fix the unmet ones. Avoid adding new facts.

\textbf{Task.} Revise the draft to satisfy the instruction \emph{and} the rubric status below.

\textbf{Rules.}

(1) Keep the content that already satisfies \textbf{Satisfied} rubrics unchanged as much as possible.

(2) Make the \textbf{minimum necessary edits} to satisfy all \textbf{Unmet} rubrics.

(3) Do not introduce new requirements beyond the rubric; do not add unsupported facts.

(4) Preserve formatting requirements (e.g., structure, ordering, required keywords) if they are listed as satisfied.

(5) Output \textbf{only} the revised answer.

\medskip
\textbf{[Instruction]}\\
\{\textcolor{blue}{instruction}\}

\medskip
\textbf{[Rubric Status]}\\
\textbf{Satisfied:}\\
\{\textcolor{blue}{satisfied\_block}\}\\
\textbf{Unmet:}\\
\{\textcolor{blue}{unmet\_block}\}\\

\medskip
\textbf{[Draft]}\\
\{\textcolor{blue}{draft}\}

\medskip
\textbf{Return only the revised answer.}
\end{tcolorbox}

\begin{tcolorbox}[
  colback=blue!2!white,
  colframe=blue!50!black,
  title=IF-Dataset Revise Prompt Template,
  fonttitle=\bfseries,
  boxrule=0.8pt,
  arc=2mm,
  breakable
]
You are a revise assistant. Keep all already satisfied rubrics intact and fix the unmet ones. Avoid adding new facts.

\textbf{Task.} Revise the draft to satisfy the instruction \emph{and} the Candidate status below.

\textbf{Rules.}

(1) Keep the content that already satisfies \textbf{Satisfied} rubrics unchanged as much as possible.

(2) Make the \textbf{minimum necessary edits} to satisfy all \textbf{Unmet} rubrics.

(3) Preserve formatting requirements (e.g., structure, ordering, required keywords) if they are listed as satisfied.

(4) Output \textbf{only} the revised answer.

\medskip
\textbf{[Instruction]}\\
\{\textcolor{blue}{instruction}\}

\medskip
\textbf{[Candidate]}\\
\textbf{Satisfied:}\\
\{\textcolor{blue}{satisfied\_block}\}\\
\textbf{Unmet:}\\
\{\textcolor{blue}{unmet\_block}\}\\

\medskip
\textbf{[Draft]}\\
\{\textcolor{blue}{draft}\}

\medskip
\textbf{Return only the revised answer.}
\end{tcolorbox}

\section{Prompt for Evaluation}
\label{app:eval_prompt}
\begin{tcolorbox}[
  colback=green!2!white,
  colframe=green!50!black,
  title=Evaluation Prompt Template,
  fonttitle=\bfseries,
  boxrule=0.8pt,
  arc=2mm,
  breakable
]
\textbf{[SYSTEM]} \\
You are an expert evaluator. For each rubric item, decide whether the response satisfies it. Return a JSON object with a single key "results" whose value is a list of 0/1 integers. The list length must equal the number of rubrics and each entry corresponds to the rubric with the same index.

\medskip
\textbf{[USER]}\\
Evaluate the response against each rubric independently.

\medskip
\textbf{[prompt]}\\
\{\textcolor{blue}{prompt}\}

\medskip
\textbf{[response]}\\
\{\textcolor{blue}{response}\}

\medskip
\textbf{[rubrics]}\\
\{\textcolor{blue}{rubrics}\}

\medskip
Return only JSON: \{\texttt{"results": [0 or 1, ...]}\}
\end{tcolorbox}

\section{Case Study}
Here we provide detailed revise case during training.

\begin{tcolorbox}[
  colback=gray!3!white,
  colframe=gray!50!black,
  title=Instruction from MulDimIF,
  fonttitle=\bfseries,
  boxrule=0.8pt,
  arc=2mm,
  breakable
]

Are piano chords utilized in house and EDM music, and how do they compare? 

The output must follow the following rules: \\
1. The answer must include headings for each genre: 'House' and 'EDM'. \\
2. The answer must include a table with a maximum of three columns to compare the use of piano chords in house and EDM. \\
3. The answer must be between 150 and 200 words. \\
4. The answer must contain between 8 and 12 sentences.\\ 
5. The answer must include the keyword 'synthesizer'. \\
6. The answer must end with a question mark. \\
7. The answer must use capitalized letters for each word.\\

\end{tcolorbox}

\begin{tcolorbox}[
  colback=blue!5!white,
  colframe=blue!50!black,
  title=Original response from Qwen2.5-7B-Instruct,
  fonttitle=\bfseries,
  boxrule=0.8pt,
  arc=2mm,
  breakable
]

\begin{tabular}{|l|l|l|}
\hline
\textbf{Genre} & \textbf{Instrument} & \textbf{Chord Type} \\
\hline
House & Piano & Major \\
Edm & Synthesizer & Minor \\
\hline
\end{tabular}

House And Edm Utilize Different Instruments For Chord Creation. Piano Chords Are Common In House Music, While Synthesizers Are Frequently Used In Edm. The Table Above Compares The Use Of Piano Chords In House And Edm. House Often Uses Major Chords, While Edm Tends To Use Minor Chords. Synthesizers Play A Significant Role In Edm, Offering A Wide Range Of Sounds And Effects. Both Genres Require Skillful Use Of Chords To Create Cohesive And Appealing Tracks. The Table Highlights The Differences In Instrument Choice And Chord Type Between The Two Genres. How Do These Differences Affect The Overall Sound Of House And Edm Tracks? \\

\textbf{score:0.8571}
\end{tcolorbox}

\begin{tcolorbox}[
  colback=red!5!white,
  colframe=red!50!black,
  title=Revise instruction,
  fonttitle=\bfseries,
  boxrule=0.8pt,
  arc=2mm,
  breakable
]
You are a revise assistant. Keep all already satisfied rubrics intact and fix the unmet ones. Avoid adding new facts. \\
Task: Revise the draft to satisfy the instruction and the Candidate status below. Rules. \\
(1) Keep the content that already satisfies Satisfied rubrics unchanged as much as possible. \\
(2) Make the minimum necessary edits to satisfy all Unmet rubrics. \\
(3) Preserve formatting requirements (e.g., structure, ordering, required keywords) if they are listed as satisfied. \\
(4) Output only the revised answer. \\
\textbf{[Instruction]} \\
Are piano chords utilized in house and EDM music, and how do they compare? \\
The output must follow the following rules: \\
1. The answer must include headings for each genre: 'House' and 'EDM'. \\
2. The answer must include a table with a maximum of three columns to compare the use of piano chords in house and EDM. \\
3. The answer must be between 150 and 200 words. \\
4. The answer must contain between 8 and 12 sentences. \\
5. The answer must include the keyword 'synthesizer'. \\
6. The answer must end with a question mark. \\
7. The answer must use capitalized letters for each word. \\

\textbf{[Candidate]} \\
Satisfied:\\ 
1. [Markdown/Heading levels] The answer must include headings for each genre: 'House' and 'EDM'; \\
2. [Table/Column limit] The answer must include a table with a maximum of three columns to compare the use of piano chords in house and EDM; \\
4. [Length/Sentences] The answer must contain between 8 and 12 sentences; \\
5. [Content/Keywords] The answer must include the keyword 'synthesizer'; \\
6. [Content/Punctuation] The answer must end with a question mark; \\
7. [Language/English] The answer must use capitalized letters for each word \\
Unmet:\\
3. [Length/Words] The answer must be between 150 and 200 words \\

\textbf{Draft}: \\
\begin{tabular}{|l|l|l|}
\hline
\textbf{Genre} & \textbf{Instrument} & \textbf{Chord Type} \\
\hline
House & Piano & Major \\
Edm & Synthesizer & Minor \\
\hline
\end{tabular}

House And Edm Utilize Different Instruments For Chord Creation. Piano Chords Are Common In House Music, While Synthesizers Are Frequently Used In Edm. The Table Above Compares The Use Of Piano Chords In House And Edm. House Often Uses Major Chords, While Edm Tends To Use Minor Chords. Synthesizers Play A Significant Role In Edm, Offering A Wide Range Of Sounds And Effects. Both Genres Require Skillful Use Of Chords To Create Cohesive And Appealing Tracks. The Table Highlights The Differences In Instrument Choice And Chord Type Between The Two Genres. How Do These Differences Affect The Overall Sound Of House And Edm Tracks? \\

Return only the revised answer.
\end{tcolorbox}

\begin{tcolorbox}[
  colback=yellow!5!white,
  colframe=yellow!50!black,
  title=Revised response from Qwen2.5-7B-Instruct under revise instruction,
  fonttitle=\bfseries,
  boxrule=0.8pt,
  arc=2mm,
  breakable
]

\begin{tabular}{|l|l|l|}
\hline
\textbf{Genre} & \textbf{Instrument} & \textbf{Chord Type} \\
\hline
House & Piano & Major \\
Edm & Synthesizer & Minor \\
\hline
\end{tabular}

House And Edm Utilize Different Instruments For Chord Creation. Piano Chords Are Common In House Music, While Synthesizers Are Frequently Used In Edm. The Table Above Compares The Use Of Piano Chords In House And Edm. House Often Uses Major Chords, While Edm Tends To Use Minor Chords. Synthesizers Play A Significant Role In Edm, Offering A Wide Range Of Sounds And Effects. Both Genres Require Skillful Use Of Chords To Create Cohesive And Appealing Tracks. The Table Highlights The Differences In Instrument Choice And Chord Type Between The Two Genres. Synthesizers Provide Various Timbres And Textures That Enhance The Electronic Sound Of Edm. The Use Of Major Chords In House Creates A Bright And Upbeat Atmosphere, While Minor Chords In Edm Create A Darker And More Mysterious Tone. How Do These Differences Affect The Overall Sound Of House And Edm Tracks? \\

\textbf{score:1}
\end{tcolorbox}

\end{document}